\newcommand{\error}{\operatorname{error}}
\newcommand{\TD}{\operatorname{TD}}
\newtheorem{theorem}{Theorem}
\newtheorem{lemma}{Lemma}
\begin{document}

\title{Multi-State TD Target for Model-Free Reinforcement Learning}

\author{Wuhao Wang, Zhiyong Chen, and Lepeng Zhang
\thanks{W. Wang and Z. Chen are with the School of Engineering, The University of Newcastle, Callaghan, NSW 2308, Australia. L. Zhang is with the Department of Computer and Information Science, The Linköping University, 581 83 Linköping, Sweden.}
}



\maketitle

\begin{abstract}
Temporal difference (TD) learning is a fundamental technique in reinforcement learning that updates value estimates for states or state-action pairs using a TD target. This target represents an improved estimate of the true value by incorporating both immediate rewards and the estimated value of subsequent states. Traditionally, TD learning relies on the value of a single subsequent state. We propose an enhanced multi-state TD (MSTD) target that utilizes the estimated values of multiple subsequent states. Building on this new MSTD concept, we develop complete actor-critic algorithms that include management of replay buffers in two modes, and integrate with deep deterministic policy optimization (DDPG) and soft actor-critic (SAC). Experimental results demonstrate that algorithms employing the MSTD target significantly improve learning performance compared to traditional methods.The source code used for our experiments can be found on GitHub\footnote{\href{https://github.com/WuhaoStatistic/MSTD-Multi-State-TD-Target-for-Model-Free-Reinforcement-Learning}{https://github.com/WuhaoStatistic}}
\end{abstract}

\begin{IEEEkeywords}
Reinforcement learning, temporal difference, 
actor-critic learning, state-action value, Q-value
\end{IEEEkeywords}

\section{Introduction}

\IEEEPARstart{D}{eep} reinforcement learning consistently grapples with various challenges, notably inaccurate Q-function estimation and delayed rewards. These obstacles often result in suboptimal policies and sluggish convergence rates, posing significant hurdles to the efficacy of reinforcement learning algorithms. To address the former challenge, researchers have proposed diverse techniques within existing algorithmic frameworks. For instance, off-policy algorithms like deep deterministic policy gradient (DDPG) employ target networks to stabilize value estimations during training. Similarly, soft actor-critic (SAC) integrates entropy regularization to foster exploration and prevent premature convergence, while twin delayed DDPG (TD3) introduces twin critics and policy smoothing to enhance stability and sample efficiency. As for the latter challenge, researchers have advocated for multi-step reinforcement learning, which accounts for returns over multiple time steps. This approach better accommodates delayed rewards, thereby enhancing the agent's performance within the environment.


In model-based reinforcement learning, multi-step greedy reinforcement learning is widely employed, leveraging the environment model to predict future states and inform action selection \cite{Model-BasedValueEstimationforEfficientModel-FreeReinforcementLearning}. Conversely, in model-free reinforcement learning, numerous multi-step methods compute Q-values by considering rewards and final actions across extended trajectories. These approaches are applied in various domains, including image processing and energy management, showcasing their versatility and effectiveness in improving learning efficiency and performance \cite{FullyConvolutionalNetworkwithMulti-StepReinforcementLearningforImageProcessing}, \cite{Multi-stepreinforcementlearningformodel-freepredictiveenergymanagementofnelectrifiedoff-highwayvehicle}, \cite{TheEffectofMulti-stepMethodsonOverestimationinDeepReinforcementLearning}, \cite{Multi-stepreinforcementlearning-basedoffloadingforvehicleedgecomputing}.

Existing multi-step reinforcement learning algorithms typically focus on processing reward signals while considering the Q-function of a single subsequent state. To overcome this limitation, we introduce a novel approach, multi-state TD target (MSTD), which enables the agent to simultaneously integrate multi-step reward signals and Q-values from various states, thereby enhancing performance. To facilitate updates based on MSTD, we propose two modes for managing the replay buffer: action-loaded and action-generated. The former emphasizes past experiences, aiding Q-value accuracy in fitting the data within the replay buffer, while the latter permits the agent to reselect actions from past experiences, ensuring Q-value estimation aligns more closely with the current policy.

We integrated MSTD into DDPG and SAC and conducted experiments in virtual environments, including Walker-v4 and HalfCheetah-v4\cite{gym}. The results demonstrate a significant improvement in learning performance. Our contributions can be summarized as follows:
 
\begin{enumerate}
    \item Introduction of the MSTD framework, incorporating the multi-state replay buffer and multi-state objective functions. We proved the convergence of MSTD framework, this work also shows the convergence of multi-step reinforcement learning, to our knowledge, this is the first work that proves the multi-step reinforcement learning can not converge to the optimal Q-function.
    \item Theoretical and experimental analysis of alignment with actor-critic algorithms and the maximum entropy framework.
    \item Introduction of two modes: action-loaded and action-generated, with experimental comparisons between them.
\end{enumerate}


\section{Related Work}

Kristopher \cite{Multistepreinforcementlearningaunifyingalgorithm} introduced a method by incorporating \(Q_\phi(\sigma)\) into function approximation to sample varying degrees between Sarsa and Expected Sarsa. Experimental results demonstrate that intermediate values of \(\sigma\) yield better results compared to both algorithms. However, the experiments in this study are limited to on-policy prediction and control problems.

Rainbow \cite{rainbow} integrates existing advancements in reinforcement learning, including multi-step settings, arguing that multi-step learning not only enhances final rewards but also accelerates learning in the early stages. However, their research primarily focuses on discrete action spaces, whereas our work investigates continuous control tasks.

Lingheng \cite{TheEffectofMulti-stepMethodsonOverestimationinDeepReinforcementLearning} explores multi-step reinforcement learning in continuous control tasks by combining multi-step settings with DDPG. However, their algorithm only aggregates rewards from multiple steps, neglecting intermediate information from middle state-action pairs, thus adhering to conventional multi-step reinforcement learning.

Yuan and Yu \cite{averagemulti} propose averaging \(n\) standard return functions, each representing a sum of discounted reward signals from different time intervals. Their method mitigates reward hacking by emphasizing further rewards to the agent. Their experiments focus on grid worlds with simple discrete actions and states, and their model's horizon is constrained to a single state.
 
Multi-step settings are also applied to model-based algorithms. Model-based value expansion (MVE) \cite{Model-BasedValueEstimationforEfficientModel-FreeReinforcementLearning} is a multi-step method that applies multi-step learning on a learned environment model. Although the algorithm is model-free, the environment model requires additional calculations, and accurately modeling a realistic environment can be challenging. 

Enhanced algorithms like stochastic ensemble value expansion (STEVE) \cite{Sample-efficientreinforcementlearningwithstochasticensemblevalueexpansion} extend various multi-step Q-value functions to achieve better performance in complex environments. However, using ensemble environment models, which include transition dynamics and reward functions, involves extra computation. Model-based planning policy learning with multi-step plan value estimation (MPPVE) \cite{Multi-StepPlanValueEstimation} collects k-step model rollouts, where the start state is real and others are generated by the model. However, only the gradient on the start state is computed, reducing computation pressure and alleviating model error. This approach, however, cannot be applied to model-free algorithms as they require an environment model.

\section{Preliminaries}

We consider a standard RL setup according to \cite{suttonbook}. At each timestep $t$, the agent receives an observation $\mathbf{s}_t$ from the environment, takes an action $\mathbf{a}_t$, and obtains a scalar reward $\mathbf{r}_{t+1}$ together with the next observation $\mathbf{s}_{t+1}$. For an infinite Markov decision process (MDP) defined by $(\mathcal{S}, \mathcal{A}, \mathcal{P}, \mathcal{R}, \gamma)$, $\mathcal{S}$, $\mathcal{A}$, and $\mathcal{R}$ are the sets of $\mathbf{s}_t$, $\mathbf{a}_t$, and $\mathbf{r}_{t+1}$ respectively. $\mathcal{P}: \mathcal{S} \times \mathcal{A} \times \mathcal{S} \rightarrow \mathbb{R}$ is the transition probability function. $\gamma \in (0,1]$ is the discount factor that balances the future and current reward. 

The accumulated expected reward $\mathbf{G}_T$:
\begin{align}
\mathbf{G}_T=\sum_{t=0}^{T} \gamma^t \mathbf{r}_{t},
\end{align}
where $T$ is the final time step in an episodic task and $T=\infty$ when a task is continuous. 
An agent's behavior is defined by a policy $\pi$, which maps the observation $\mathbf{s}_t$ to action $\mathbf{a}_t$. 
The state-action value function $Q^{\pi}(s,a)$ defined below:
\begin{align}
\label{eq:preli}
Q^{\pi}(s, a) &= \underset{\mathbf{s}_t \sim \mathcal{P}, \mathbf{a}_t \sim \pi}{\mathbb{E}}[\mathbf{r}_{t+1}+\gamma\mathbf{r}_{t+2}+\cdots \mid s=\mathbf{s}_t, a=\mathbf{a}_t].
\end{align}
According to the algorithms, the behavior policy can be deterministic or stochastic. An RL agent aims to find the optimal policy $\pi^*$ to maximize $Q^{\pi}(s,a)$.

Under policy gradient and actor-critic style, the behavior policy is defined as actor $\pi_{\theta}$, parameterized by $\theta$. Normally, a deterministic actor will output a scalar for each dimension of action, whereas a stochastic actor will generate the parameters of a certain distribution to sample an action. The critic, parameterized by $\phi$ and denoted by $Q^{\pi}_{\phi}(s,a)$, estimates the state-action value function $Q^{\pi}(s,a)$. 

In the following text, we will use $Q^{\pi}_{\phi}(\mathbf{s}_t,\mathbf{a}_t)$ to indicate the state-action value function at time $t$.  The temporal difference (TD) method, which allows the gradient update to happen when a task is not finished, is applied to accelerate convergence and reduce variance, the corresponding TD target is:
\begin{align}
\label{eq:tddef}
\TD &= \mathbf{r}_{t+1}+\gamma Q^{\pi}_{\phi}(\mathbf{s}_{t+1},\mathbf{a}_{t+1}).
\end{align}
This target determines the state-action function by considering its value at the subsequent one-step and the intermediate reward.
It implies a TD error, $\TD -Q^{\pi}_{\phi}(\mathbf{s}_t,\mathbf{a}_t)$, to update the value function as follows:
\begin{align}  \label{valueupdate}
Q^{\pi}_{\phi}(\mathbf{s}_t,\mathbf{a}_t)\leftarrow
(1-\alpha) Q^{\pi}_{\phi}(\mathbf{s}_t,\mathbf{a}_t)  + \alpha \TD.
\end{align}
for a learning rate $\alpha$.

Recent literature extends this one-step TD target into a multi-step rule, as shown below:
\begin{align}
\label{eq:multitddef}
  \TD &= \sum_{i=1}^L \gamma^{i-1}\mathbf{r}_{t+i}+\gamma^L Q^{\pi}_{\phi}(\mathbf{s}_{t+L},\mathbf{a}_{t+L}),
\end{align}
where $L$ is the step size. When $L=1$, the multi-step method degrades to single-step reinforcement learning. For an episodic task or task with limited steps, where the maximum step $T$ exists,  the multi-step update rule for time interval $T-L<t\leq T$  is different. If  $t>T-L$,  a truncated return function is applied:
\begin{align}
\label{eq:truncatedmultitddef}
   \TD &= \mathbf{r}_{t+1}+\gamma\mathbf{r}_{t+2}+\cdots+\gamma^{T-t-1}\mathbf{r}_{T-1}.
\end{align}
This could be done by setting the element in \eqref{eq:multitddef} whose time index is greater than $L$ to zero. Meanwhile, this update can not happen when the length of the collected trajectory is shorter than $L$.
It is noted that in some literature such as \cite{averagemulti}, the multi-step intermediate reward $\sum_{i=1}^L \gamma^{i-1}\mathbf{r}_{t+i}$ in the definition of \eqref{eq:multitddef} takes different weighted summation form. 

\section{Multi-state Temporal Difference}
 
In both the single-step TD \eqref{eq:tddef} and the multi-step version \eqref{eq:multitddef}, the TD target is calculated based on the Q-value at one state, either $\mathbf{s}_{t+1}$ or $\mathbf{s}_{t+L}$, and the corresponding action. Therefore, they are referred to as single-state TD methods. The proposed TD target is based on the Q-value at multiple states. The idea is motivated by averaging the TD targets defined for various step sizes, that is,
\begin{align}
\label{eq:multistateqsum}
\TD &=\frac{1}{L}\sum_{l=1}^L \left\{ \sum_{i=1}^l \gamma^{i-1}\mathbf{r}_{t+i}+\gamma^l Q^{\pi}_{\phi}(\mathbf{s}_{t+l},\mathbf{a}_{t+l}) \right\} \nonumber \\
&=\frac{1}{L}\sum_{l=1}^L \sum_{i=1}^l \gamma^{i-1}\mathbf{r}_{t+i}+\frac{1}{L}\sum_{l=1}^L \gamma^l Q^{\pi}_{\phi}(\mathbf{s}_{t+l},\mathbf{a}_{t+l}).  
\end{align}
This multi-state TD (MSTD) target requires the $Q$-values at multiple states $\mathbf{s}_{t+l}$, $l=1,\cdots,L$, and the corresponding actions. We aim to study how this new MSTD can be used for value function update as in \eqref{valueupdate} and its advantages compared with the conventional \eqref{eq:tddef} or \eqref{eq:multitddef}.



The objective for training the neural network is defined as:
\begin{align}
\label{eq:msobj}
\min_\phi J(\phi) = \frac{1}{N}\sum_{i=1}^N \left[ Q^{\pi}_{\phi}(\mathbf{s}^i_t,\mathbf{a}^i_t)-\TD^i \right]^2,
\end{align}
where each pair $(\mathbf{s}^i_t,\mathbf{a}^i_t)$ is a sample from the replay buffer, $\TD^i$ is the corresponding TD target, $i$ indicates the sample index, and $N$ is the batch size. Since the convergence of conventional n-step reinforcement learning is guaranteed, our method based on the multi-state TD target can also converge to the optimal policy, noting that the gradient in this case is the average of the gradients in conventional multi-step TD, as derived from \eqref{eq:multistateqsum}.

Using the conventional multi-step (single-state) TD target defined in \eqref{eq:multitddef}, we require $(\mathbf{s}_t,\mathbf{a}_t,\mathbf{r}_{t+1}, \cdots, \mathbf{r}_{t+L}, \mathbf{s}_{t+L},\mathbf{a}_{t+L})$ for each sample $i$ to complete the calculation \eqref{eq:msobj}. Since $\mathbf{a}_{t+L}$ is generated by the actor according to the current policy, the sample data in the replay buffer takes the tuple structure $(\mathcal{S}, \mathcal{A}, \mathcal{R}_1, \mathcal{R}_2,\cdots, \mathcal{R}_L, \mathcal{S}_{\operatorname{next}})$, where $\mathcal{S}_{\operatorname{next}}$ indicates the next state that involves in TD calculation (if $\mathcal{S}$ is recorded at time step $t$, then $\mathcal{S}_{\operatorname{next}}$ will be recorded at time step $t+L$). The design of a replay buffer and the corresponding training algorithms for the new MSTD in \eqref{eq:multistateqsum} becomes more complex as it requires more state/action pairs to complete the calculation in \eqref{eq:msobj}. These aspects will be discussed in the subsequent section.


\section{Algorithms}

In this section, we investigate the design of a replay buffer and a training scheme for MSTD within actor-critic settings. We introduce two types of modes: action-loaded and action-generated. To illustrate these algorithms, we provide pseudocode based on DDPG with MSTD as a case study. Additionally, we discuss the integration of the soft actor-critic algorithm with MSTD.

 \subsection{Maintenance of a Replay Buffer}
\label{se:multistatetrajecandbuffer}

The MSTD method, as described in \eqref{eq:multistateqsum}, takes into account the Q-values of state-action pairs over an extended trajectory. To efficiently compute \eqref{eq:msobj} using MSTD, we need to store the sequences 
\begin{align} \label{mstdsequence}
 (\mathbf{s}_t,\mathbf{a}_t,\mathbf{r}_{t+1}, \cdots,
\mathbf{s}_{t+L-1},\mathbf{a}_{t+L-1},\mathbf{r}_{t+L}, \mathbf{s}_{t+L},\mathbf{a}_{t+L})   
\end{align}
for each sample $i$ in the replay buffer. This storage is necessary to capture the detailed interaction information between the agent and the environment over the trajectory.

Since $\mathbf{a}_{t+L}$ is generated by the actor according to the current policy, the sample data in the replay buffer follows the structure $(\mathcal{S}_0, \mathcal{A}_0, \mathcal{R}_1, \cdots, \mathcal{S}_{L-1}, \mathcal{A}_{L-1}, \mathcal{R}_L,\mathcal{S}_{L})$.  To maintain and update samples in this fixed-length structure, we continuously collect tuples $(\mathcal{S},\mathcal{A},\mathcal{R},\mathcal{S}_{\operatorname{next}})$ during training. Each time a new tuple is collected, we delete the oldest triplet $(\mathcal{S}_0, \mathcal{A}_0, \mathcal{R}_1)$ from the sample and append the new tuple to the end with $\mathcal{S}=\mathcal{S}_{L}$. This process ensures that the sample is continuously updated. Figure \ref{fig:tempbuffer} illustrates this process. 
 
It is noteworthy that when the agent's episode terminates at time $T$, no more tuples are collected. In this scenario, we repeatedly append the last tuple, known as the termination tuple, to the sample using the same updating process. This repetition continues for $L$ updates until the termination tuple fills the sample. These repetitive tuples are marked with a termination flag, indicating that they do not contribute to the gradient computation. This operation helps the agent learn from the final states of the episode, as mentioned in \eqref{eq:truncatedmultitddef}.

\begin{figure}[H]
\centering
\includegraphics[width=0.9\linewidth]{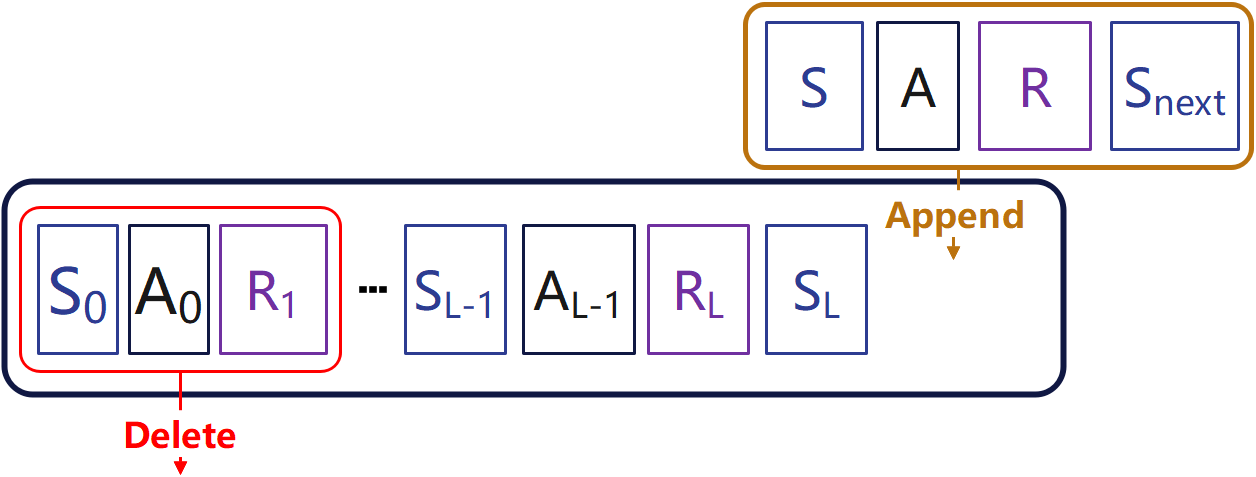}
\caption{Management of sample structure in a replay buffer.}
\label{fig:tempbuffer}
\end{figure}

\subsection{Action-loaded and Action-generated Modes}

In the sequences \eqref{mstdsequence}, it is observed that the action $\mathbf{a}_{t+L}$ is generated by the actor according to the current policy, while the intermediate actions $\mathbf{a}_{t+1},\cdots, \mathbf{a}_{t+L-1}$ are saved in the buffer and loaded for the calculation of \eqref{eq:msobj}. This operation is called an action-loaded mode. Therefore, the average of the Q-values in \eqref{eq:multistateqsum} is specifically expressed as:
\begin{align}
\label{eq:action_lo_td}
\frac{1}{L}\left\{ \sum_{l=1}^{L-1} \gamma^l Q^{\pi}_{\phi}(\mathbf{s}_{t+l},\mathbf{a}_{t+l})
+\gamma^{L} Q^{\pi}_{\phi}(\mathbf{s}_{t+L},\pi_A(\mathbf{s}_{t+L})) \right\}
\end{align}
where $\pi_A(\mathbf{s}_{t+L})$ indicates that this action is generated by the actor according to the policy $\pi_A$.

Alternatively, similar to the action $\mathbf{a}_{t+L}$, the other actions $\mathbf{a}_{t+1},\cdots, \mathbf{a}_{t+L-1}$ can also be generated by the actor, rather than being saved in the buffer. This operation is called an action-generated mode, and the average of the Q-values in \eqref{eq:multistateqsum} is specifically expressed as:
\begin{align}
\label{eq:action_ge_td}
\frac{1}{L}\sum_{l=1}^{L} \gamma^{l} Q^{\pi}_{\phi}(\mathbf{s}_{t+l},\pi_A(\mathbf{s}_{t+l})).
\end{align}
Figure~\ref{fig:action_load_gene} illustrates the structures of these two modes.


\begin{figure}[H]
\centering
\includegraphics[width=0.8\linewidth]{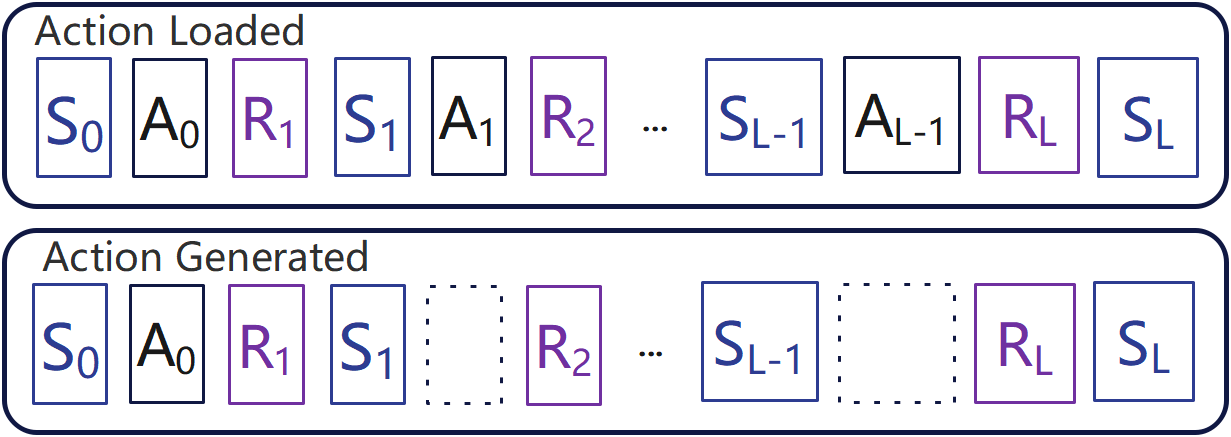}
\caption{Two modes of sample structure in a replay buffer. }
\label{fig:action_load_gene}
\end{figure}

These two modes of operation exhibit different emphases on empiricism and rationalism in multi-state reinforcement learning. The action-loaded mode emphasizes the use of collected data, with the computed results fully reflecting the accumulated experience. On the other hand, the action-generated mode allows the most recent policies to participate in gradient updating. We hypothesize that the policy network will make more accurate decisions as it is trained further. Consequently, this mode leads to the acquisition of Q-functions that are more closely aligned with the true values.

\subsection{Action-loaded DDPG with MSTD}

In this section, we present a DDPG algorithm with action-loaded MSTD as a case study. The algorithm is abbreviated as MSDDPG. The objective for the actor in this algorithm is the same as DDPG:
\begin{align}
    \label{eq:msddpgobactor}
    J_\pi(\theta) = - Q^{\pi}_\phi(\mathbf{s}_t,\pi(\mathbf{s}_t)),
\end{align}
where $Q$ and $\pi$ are the Q-network and policy-network. The details can be found in Algorithm \ref{alg:wddpg}

\begin{algorithm}
 \caption{Action-Loaded MSDDPG}
 \label{alg:wddpg}
 \begin{algorithmic}[1]
 \renewcommand{\algorithmicrequire}{\textbf{Input:}}
 \renewcommand{\algorithmicensure}{\textbf{Output:}}
 \REQUIRE Environment with state space $\mathcal{S}$ and action space $\mathcal{A}$
 \ENSURE  Learned policy $\pi$ and Q-value function $Q$
 
  \STATE Initialize policy network $ \pi(s | \theta)$ and Q networks $Q(s, a | \phi)$ with random weights

  \STATE Initialize target networks $\pi'$ and $Q'$ with weights $\theta' \leftarrow \theta$, $\phi' \leftarrow \phi$\;
  \STATE Initialize multi-state replay buffer $\mathcal{B}$\ 
  
  \STATE Set hyperparameters: discount factor $\gamma$, target smoothing parameter $\tau$, exploration noise $\mathcal{N}$, step size $L$, number of total training episodes $M$, batch size $B$
 
  \FOR {$episode = 1$ to $M$}
        \STATE Receive random initial observation state $\mathbf{s}_0$
        
        \WHILE{Not Terminated}
            \STATE Select action $\mathbf{a}_t = \pi(\mathbf{s}_t | \theta) + \mathcal{N}_t$ according to the current policy and exploration noise;
            \STATE Execute $\mathbf{a}_t$ in the environment and observe next state $\mathbf{s}_{t+1}$ and reward $\mathbf{r}_{t+1}$;
            \STATE Append $(\mathbf{s}_t, \mathbf{a}_t, \mathbf{r}_t, \mathbf{s}_{t+1})$ to $\mathcal{B}$ as illustrated in Figure \ref{fig:tempbuffer};
            \IF{Len($\mathcal{B}$) $\ge$ $B$}
                \STATE Sample a minibatch of $N$ transitions from $\mathcal{B}$;
                \STATE Calculate TD target and objective for Q network according to \eqref{eq:multistateqsum} and \eqref{eq:msobj};
                \STATE Calculate the objective for the policy network according to \eqref{eq:msddpgobactor};
                \STATE update the policy network and the Q network;
                \STATE Update the target networks: $\theta' ;\leftarrow \theta$, $\phi' \leftarrow \phi$
            \ENDIF
  \ENDWHILE
  \STATE Adding final-states-samples to replay buffer as illustrated in Section\ref{se:multistatetrajecandbuffer};
  \ENDFOR   
 \end{algorithmic} 
 \end{algorithm}
 
\subsection{Integration with Maximum Entropy}

The MSTD method can also be integrated with the soft actor-critic (SAC) algorithm. In SAC, the Q-function includes an additional entropy term to encourage exploration, defined as follows:
\begin{equation} Q_{\phi}^{s \pi}\left(\mathbf{s}_t,\mathbf{a}_t\right)=Q^{\pi}_{\phi}\left(\mathbf{s}_t, \mathbf{a}_t\right)-\alpha \log \pi\left(\mathbf{a}_t \mid \mathbf{s}_t\right).
\end{equation}
By incorporating this Q-function into MSTD \eqref{eq:multistateqsum}, we can derive a soft MSTD target.

The SAC algorithm integrated with MSTD, abbreviated as MSSAC, can be implemented in both action-loaded and action-generated modes. In the action-loaded mode, to maintain the consistency of the replay buffer, we do not store the action entropy in the replay buffer. Thus, in practice, action-loaded MSSAC only introduces the entropy term in the last Q-value function. Consequently, the average of the Q-values in \eqref{eq:multistateqsum} is modified to:
\begin{align}
\label{eq:soft_ad_td2}
\frac{1}{L}\left\{ \sum_{l=1}^{L-1} \gamma^l Q^{\pi}_{\phi}(\mathbf{s}_{t+l},\mathbf{a}_{t+l})
+\gamma^{L} Q^{s \pi}_{\phi}(\mathbf{s}_{t+L},\pi_A(\mathbf{s}_{t+L})) \right\}.
\end{align}
In the action-generated mode, the entropy term is included in all the Q-value functions in \eqref{eq:multistateqsum}, and the average of the Q-values is modified to:
 \begin{align}
\label{eq:action_ge_td}
\frac{1}{L}\sum_{l=1}^{L} \gamma^{l} Q^{\pi}_{s \phi}(\mathbf{s}_{t+l},\pi_A(\mathbf{s}_{t+l})).
\end{align}

\section{Experiments}

We conducted experiments in the Gymnasium virtual environment \cite{gym} HalfCheetah-v4 and Walker-v4, using vectorized observations as state inputs. The environment HalfCheetah-v4 involves a two-legged robot moving forward in a 2D plane to maximize speed while keeping stable. The state space is a 17-dimensional vector including the robot's position, velocity, and joint angles. Walker-v4 features a bipedal robot tasked with walking forward in a 2D plane to achieve maximum forward speed. The state space has a 17-dimensional vector that captures the robot's position, velocities, and joint angles. The results are averaged over 15 random seeds. In this section, we aim to illustrate three points:

\begin{enumerate}
    \item  The effectiveness of the multi-state setting compared to original algorithms and conventional multi-step algorithms \eqref{eq:multitddef}.
    \item The comparison between action-loaded and action-generated settings.
    \item The comparison between different step sizes.
\end{enumerate}

The experiment results and the hyperparameter for each experiment can be visited below.

"Hidden width" indicates the number of neurons in the hidden layers. $\tau$ is the proportion in the soft target policy update. "lr" represents the learning rates. $\alpha$ is the entropy regularizer for SAC algorithms while $\mu$, $\theta$, and$\sigma$ is the hyperparameter controlling the random process in DDPG.
\begin{table}[ht]
    \centering
    \begin{minipage}{0.4\linewidth}
        \centering
        \begin{tabular}{cc} 
        \toprule
        hyperparameters & value \\
        \midrule
        hidden width & 256 \\
        $\tau$ & 0.005 \\
        lr & 0.0004  \\
        $\gamma$ & 0.99\\
        batch size & 128\\
        $\alpha$ & 0.12 \\
        \bottomrule
    \end{tabular}
        \caption{Hyperparameters.}
    \end{minipage}%
    \hfill
    \begin{minipage}{0.4\linewidth}
        \centering
        \begin{tabular}{cc} 
        \toprule
        hyperparameters & value \\
        \midrule
        hidden width & 256 \\
        $\tau$ & 0.005 \\
        lr & 0.0003\\
        $\gamma$ & 0.99\\
        batch size & 128\\
        $\mu$ & 0\\
        $\theta$ & 0.2\\
        $\sigma$ & 0.3\\
        \bottomrule
    \end{tabular}
    \caption{Hyperparameters.}
    \end{minipage}
\end{table}

We employed the default hyperparameters outlined in the respective papers for DDPG and SAC, maintaining consistency for MSDDPG and MSSAC across both action-generated and action-loaded settings. Additionally, we compared the results with conventional multi-step algorithms based on the TD target \eqref{eq:multitddef}, denoted as MPDDPG and MPSAC, respectively. To ensure a fair comparison, each model underwent training on the same device for an equal number of training steps.

The empirical results demonstrate that both action-generated and action-loaded MSDDPG, as well as MSSAC, exhibit superior performance compared to both conventional multi-step and original algorithms in the HalfCheetah-v4 environment, as shown in Table~\ref{tab:exp res}. Specifically, in the Walker-v4 environment, the multi-step method outperforms the action-generated multi-state method but is inferior to the action-loaded multi-state method, as illustrated in Figures~\ref{fig:ddpg_walker} and~\ref{fig:sac_walker}. In the HalfCheetah-v4 environment, both action-generated and action-loaded multi-state methods outperform traditional multi-step methods, with the action-generated method achieving the highest performance, as shown in Figures~\ref{fig:ddpg_half} and~\ref{fig:sac_half}.

We also observed that the benefits of using action-loaded and action-generated settings are environment-specific. In the Walker-v4 environment, the action-loaded setting consistently outperforms the action-generated setting in terms of both convergence speed and the reward returned, as shown in Figures~\ref{fig:ddpg_walker} and~\ref{fig:sac_walker}. Conversely, Figures~\ref{fig:ddpg_half} and~\ref{fig:sac_half} clearly show that the action-generated setting outperforms the action-loaded setting in the HalfCheetah-v4 environment.


\begin{table*}[t]
\centering
\caption{Experiment results. AL and AG indicate action-loaded and action-generated methods respectively.}
\begin{tabular}{|c|c|c|c|c|c|c|}
\hline
 & \multicolumn{3}{|c|}{Walker-v4} & \multicolumn{3}{|c|}{HalfCheetah-v4} \\
\hline
SAC & \multicolumn{3}{|c|}{$4476.2\pm707.2$} & \multicolumn{3}{|c|}{$9003.5\pm2846.2$} \\
\hline
MPSAC & $4371.1 \pm 350.2$ & $4506.8 \pm 564.3$ & $4227.1 \pm 556.3$ & $8459.9 \pm 856.7$ & $6478.6 \pm 875.9$ & $5869.2 \pm 942.7$ \\
\hline
MSSAC(AL) & $\mathbf{4938.4\pm341.5}$ & $\mathbf{4953.2\pm503.4}$ & $4504.15\pm1226.3$ & $11447.4\pm1729.9$ &$10574.8\pm1516.2$& $9349.4\pm2494.9$ \\
\hline
MSSAC(AG) & $4456.6\pm844.9$ & $4530.7\pm826.5$ & $\mathbf{4778.4\pm1098.2}$ & $\mathbf{13088.8\pm1134.4}$ & $\mathbf{12168.1\pm1166.2}$ & $\mathbf{11073.3\pm1309.1}$ \\
\hline
 & 2 STEP & 3 STEP & 4 STEP & 2 STEP & 3 STEP & 4 STEP \\
\hline
DDPG & \multicolumn{3}{|c|}{$406.0\pm280.5$} & \multicolumn{3}{|c|}{$5634.5\pm919.0$} \\
\hline
MPDDPG & $1971.0\pm870.3$ & $2853.9\pm908.2$ & $3611.1\pm944.3$ & $5247.9\pm1049.3$ & $4184.6\pm802.4$ & $3759.4\pm475.4$\\
\hline
MSDDPG(AL) & $\mathbf{2756.1\pm1107.5}$ & $\mathbf{4089.2\pm981.1}$ &$\mathbf{4726.0\pm906.0}$& $6623.2\pm1476.4$ & $6085.0\pm1316.1$ & $5338.3\pm1285.6$ \\
\hline
MSDDPG(AG) & $1473.8\pm615.1$ & $2011.6\pm797.6$ & $2424.7\pm1153.3$ & $\mathbf{7717.8\pm624.3}$ & $\mathbf{7069.2\pm1402.1}$ & $\mathbf{7108.33\pm548.0}$ \\
\hline
\end{tabular}
\label{tab:exp res}
\end{table*}

\section{Conclusion}

We propose a multi-state TD target and reinforcement learning algorithms based on this target in two modes: action-loaded and action-generated. Empirical results demonstrate that both modes offer distinct advantages. Future work could explore how different environmental factors influence the performance of these action-loaded and action-generated approaches.

\section*{Appendix: Convergence Proof for Q-learning with MSTD}

In this section, we demonstrate that Q-learning with the MSTD target converges under commonly used conditions. The convergence proof relies on two fundamental lemmas listed below. The proof of Lemma~\ref{lemma:Delta} and the main concept of the proof of Lemma~\ref{lemma:H operator} are detailed in \cite{jaakkola1994convergence} and \cite{melo2001convergence}, respectively.

\begin{lemma}\label{lemma:Delta}
A random process  
\begin{align}
    \Delta_{t+1}(x)=\left(1-\alpha_t(x)\right) \Delta_t(x)+\alpha_t(x) F_t(x)
\end{align}
converges to zero w.p.1 under the following assumptions:
\begin{enumerate}
\item The state space is finite;
    \item $0\leq\alpha_t(x)\leq1$, $\sum_t\alpha_t(x)=\infty$, and $\sum_t\alpha_t^2(x)<\infty$;
    \item $\left\|\mathbb{E}\left[F_t(x)|P_t \right]\right\|_\infty \leq \gamma\left\|\Delta_t\right\|_\infty$ for  $\gamma \in [0,1)$; and
    \item $\operatorname{var}[F_t(x)|P_t] \leq C(1+\|\Delta_t\|_\infty)^2$ for $C>0$,
\end{enumerate}
where $P_t$ stands for the past at step $t$, $a_t$ and $F_t$ are allowed to depend on the past insofar as the above conditions remain valid. The notation $\|\cdot\|_\infty$ indicates the maximum norm and $\left\|\Delta_t\right\|_\infty =
\max_{x} |\Delta_t(x)|$. 
\end{lemma}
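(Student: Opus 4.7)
The plan is to follow the classical stochastic approximation argument of Jaakkola, Jordan and Singh, decomposing $F_t(x)$ into a predictable contraction part and a zero-mean noise part, and then controlling each separately. Concretely, I would write $F_t(x) = \mathbb{E}[F_t(x)\mid P_t] + w_t(x)$, where $w_t(x)$ is a martingale difference with respect to the filtration $\{P_t\}$. Assumption (3) then gives $|\mathbb{E}[F_t(x)\mid P_t]| \le \gamma\|\Delta_t\|_\infty$, and assumption (4) gives $\operatorname{var}[w_t(x)\mid P_t] \le C(1+\|\Delta_t\|_\infty)^2$. Substituting into the recursion produces
$$\Delta_{t+1}(x) = (1-\alpha_t(x))\Delta_t(x) + \alpha_t(x)\mathbb{E}[F_t(x)\mid P_t] + \alpha_t(x)w_t(x),$$
exposing a contractive drift with an additive martingale noise term.

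Next I would introduce a companion process $U_{t+1}(x) = (1-\alpha_t(x))U_t(x) + \alpha_t(x)w_t(x)$ with $U_0(x)=0$. As long as $\|\Delta_t\|_\infty$ stays bounded along the trajectory, the Robbins--Monro conditions $\sum_t \alpha_t(x)=\infty$ and $\sum_t \alpha_t^2(x)<\infty$ together with the conditional variance bound imply, via the standard martingale convergence / Robbins--Siegmund theorem, that $U_t(x)\to 0$ almost surely for every $x$. Finiteness of the state space (assumption 1) upgrades this pointwise convergence to $\|U_t\|_\infty \to 0$ almost surely.

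I would then study the residual $Y_t(x) = \Delta_t(x) - U_t(x)$, which satisfies
$$|Y_{t+1}(x)| \le (1-\alpha_t(x))|Y_t(x)| + \alpha_t(x)\gamma\|\Delta_t\|_\infty \le (1-\alpha_t(x))|Y_t(x)| + \alpha_t(x)\gamma\bigl(\|Y_t\|_\infty + \|U_t\|_\infty\bigr).$$
Taking the supremum over the finite state space and iterating, one shows by a Gronwall-type argument that $\limsup_t \|Y_t\|_\infty \le \gamma \limsup_t \|Y_t\|_\infty$, since $\|U_t\|_\infty$ vanishes. Because $\gamma<1$, this forces $\|Y_t\|_\infty \to 0$, and hence $\|\Delta_t\|_\infty \to 0$ almost surely.

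The hard part will be the circular dependency built into assumption (4): the variance bound is expressed in terms of $\|\Delta_t\|_\infty$, so the convergence $U_t\to 0$ in the second step nominally requires $\|\Delta_t\|_\infty$ to stay bounded, which is exactly what the third step is trying to establish. The standard resolution is a stopping-time/truncation argument: one defines, for each integer $M$, a stopped process that agrees with $\Delta_t$ until $\|\Delta_t\|_\infty$ first exceeds $M$ and is frozen thereafter. On this event the variance bound becomes uniformly $C(1+M)^2$, so the martingale argument for $U_t$ goes through unconditionally, and the contraction step then shows that almost surely the trajectory is eventually trapped below any sufficiently large $M$; letting $M\to\infty$ removes the truncation. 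This interleaving of boundedness and contraction is the only delicate piece; everything else is routine once the decomposition above is in place.
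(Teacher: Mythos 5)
The paper does not actually prove this lemma: it is imported verbatim from Jaakkola, Jordan and Singh (the reference \cite{jaakkola1994convergence}), and the appendix only uses it as a black box. Your sketch is a faithful reconstruction of the standard argument from that source — the decomposition $F_t(x)=\mathbb{E}[F_t(x)\mid P_t]+w_t(x)$ into a $\gamma$-contractive drift and a martingale-difference noise, the companion process $U_t$ absorbing the noise under the Robbins--Monro conditions, the deterministic contraction on the residual $Y_t=\Delta_t-U_t$, and the recognition that the state-dependent variance bound creates a circular boundedness problem. So there is nothing to compare against in the paper itself; measured against the cited proof, your structure is the right one.

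The one place your plan is genuinely thinner than what is needed is the boundedness step. As literally stated, the truncation argument does not close: the process stopped at the first time $\|\Delta_t\|_\infty>M$ is bounded by $M$ \emph{by construction}, so showing that the stopped process converges to zero tells you nothing about whether the stopping time is ever reached, i.e. whether the original process stays below $M$. To conclude "the trajectory is eventually trapped below any sufficiently large $M$" you need an additional maximal-inequality or rescaling argument — in Jaakkola et al.\ this is their separate boundedness theorem, which renormalizes the process whenever it exceeds a threshold $D$ so that the noise variance becomes uniformly bounded at scale $D$, and then uses $\gamma<1$ to show the rescaled process can exceed the threshold only finitely often w.p.1. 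Your Gronwall step also implicitly assumes $\limsup_t\|Y_t\|_\infty<\infty$ before the inequality $\limsup\le\gamma\limsup$ has any force, which is the same missing boundedness. Everything else in the proposal is routine and correct once that piece is supplied.
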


\begin{lemma} \label{lemma:H operator}
In the context of multi-state reinforcement learning under a given policy, we consider the Q-function $q: \mathcal{S}\times\mathcal{A}\rightarrow \mathbb{R}$. We define an operator $\mathbf{H}^l$, for a positive integer $l$, as follows:
\begin{align}
    (\mathbf{H}^lq)(s,a) = \sum_{y \in \mathcal{S}} p_l(y|s,a)\left[\bar r_{l}+\gamma^l \max_{b \in \mathcal{A}}q(y,b)\right], \label{Hdef}
\end{align}
where $p_l(y|s,a)$ represents the probability of the transition to state $y$ in $l$ steps given state-action pair $(s,a)$, and $\bar r_l$ is the accumulated reward. This operator is a contraction in the sense that
\begin{align}
    \|\mathbf{H}^l {q_1}-\mathbf{H}^l {q_2}\|_\infty \leq \gamma^l \|q_1-q_2\|_\infty \label{contraction}
\end{align}
for any two Q-functions $q_1$ and $q_2$.
\end{lemma}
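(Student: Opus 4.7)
The plan is to prove the contraction property by direct computation of the pointwise difference $(\mathbf{H}^l q_1)(s,a) - (\mathbf{H}^l q_2)(s,a)$ and then taking the supremum over $(s,a)$. First I would observe that the accumulated reward $\bar r_l$ in \eqref{Hdef} does not depend on the Q-function argument, so it cancels when subtracting, leaving
\begin{align}
(\mathbf{H}^l q_1)(s,a) - (\mathbf{H}^l q_2)(s,a) = \gamma^l \sum_{y\in\mathcal{S}} p_l(y|s,a)\Bigl[\max_{b\in\mathcal{A}} q_1(y,b) - \max_{b\in\mathcal{A}} q_2(y,b)\Bigr].
\end{align}
This isolates all of the $q$-dependence in a single non-linear $\max$ term weighted by the $l$-step transition kernel.

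The key step is then to control the difference of maxima. I would invoke the elementary inequality $\bigl|\max_b q_1(y,b) - \max_b q_2(y,b)\bigr| \le \max_b |q_1(y,b) - q_2(y,b)| \le \|q_1 - q_2\|_\infty$, justified by picking $b^\star$ attaining the larger maximum and noting $q_1(y,b^\star) - q_2(y,b^\star) \le \|q_1 - q_2\|_\infty$, then repeating with the roles of $q_1$ and $q_2$ swapped. Applying the triangle inequality to the sum, using non-negativity of $p_l(\cdot|s,a)$, and invoking $\sum_{y} p_l(y|s,a) = 1$ (since $p_l(\cdot|s,a)$ is a genuine probability distribution over the state reached after $l$ steps under the fixed policy) gives $\bigl|(\mathbf{H}^l q_1)(s,a) - (\mathbf{H}^l q_2)(s,a)\bigr| \le \gamma^l \|q_1 - q_2\|_\infty$. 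Taking the supremum over $(s,a) \in \mathcal{S}\times\mathcal{A}$ then yields \eqref{contraction}.

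Honestly, I do not expect any real obstacle: the lemma is a direct $l$-step analogue of the classical Bellman-optimality contraction, and the $\gamma^l$ factor appears mechanically from the definition \eqref{Hdef}. The only point that deserves a line of justification is the max inequality above, since the $\max$ is nonlinear; everything else is bookkeeping. The $l$-step structure does not introduce extra combinatorial complications precisely because all the multi-step dynamics are absorbed into the single kernel $p_l(y|s,a)$, reducing the problem formally to the one-step contraction argument of \cite{melo2001convergence}.
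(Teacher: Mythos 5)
Your proposal is correct and follows essentially the same route as the paper's proof: cancel the reward term, pull out $\gamma^l$, bound the difference of maxima by $\|q_1-q_2\|_\infty$, and use that $p_l(\cdot\,|\,s,a)$ sums to one. If anything, your version is slightly more carefully presented, since you make explicit both the triangle inequality over the sum and the justification of $|\max_b q_1(y,b)-\max_b q_2(y,b)|\leq \max_b|q_1(y,b)-q_2(y,b)|$ via the maximizer $b^\star$, steps the paper leaves implicit in its ``compare the two expressions'' conclusion.
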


\begin{proof} 
Based on the definition \eqref{Hdef} and the maximum norm, we have
    \begin{align*}
        \|&\mathbf{H}^l {q_1}-\mathbf{H}^l {q_2}\|_\infty  \nonumber \\ 
        =& \max_{s\in \mathcal{S},a\in \mathcal{A}} \Bigg| \sum_{y \in \mathcal{S}} p_l(y|s,a)\big[\bar{r}_{l}+\gamma^l \max_{b \in \mathcal{A}}q_2(y,b) -\bar{r}_{l}\nonumber \\
        &-\gamma^l \max_{b \in \mathcal{A}}q_1(y,b) \big]\Bigg| \nonumber \\
        =& \max_{s\in \mathcal{S},a\in \mathcal{A}} \gamma^l \Bigg| \sum_{y \in \mathcal{S}}p_l(y|s,a)\big[\max_{b \in \mathcal{A}}q_1(y,b)-\max_{b \in \mathcal{A}}q_2(y,b)\big]\Bigg|.
\end{align*}
Additionally, we have
\begin{align*}
 & \gamma^l \|q_1-q_2\|_\infty \\
        =& \max_{s\in \mathcal{S},a\in \mathcal{A}} \gamma^l \sum_{y \in \mathcal{S}}p_l(y|s,a)\|q_1-q_2\|_\infty \nonumber \\
        =& \max_{s\in \mathcal{S},a\in \mathcal{A}} \gamma^l \sum_{y \in \mathcal{S}}p_l(y|s,a) \max_{z\in \mathcal{S},b\in \mathcal{A}}|q_1(z,b)-q_2(z,b)| \nonumber \\
       \geq & \max_{s\in \mathcal{S},a\in \mathcal{A}} \gamma^l \sum_{y \in \mathcal{S}}p_l(y|s,a) \Bigg| \max_{b \in \mathcal{A}}q_1(y,b)-\max_{b \in \mathcal{A}}q_2(y,b)\Bigg| .
    \end{align*}
Comparing these two expressions implies \eqref{contraction}.
\end{proof}

In practice, a finite number of training steps and a finite-sized replay buffer are typically used. This results in the sets $\mathcal{S}$ and $\mathcal{A}$ being finite, making $(q,\|\cdot\|_\infty)$ a complete metric space. Furthermore, Lemma~\ref{lemma:H operator} demonstrates that the operator $\mathbf{H}^l$ is a contraction mapping. Consequently, by Banach's Fixed Point Theorem, the operator $\mathbf{H}^l$ has a fixed point. Suppose there exists $Q^f$ such that 
\begin{align}
    \label{eq: fixed point}
  (\mathbf{H}^lQ^f)(s, a)=Q^f(s, a).
\end{align}

Let $Q_t(s,a)$ represent the critic output $Q_{\phi}^{\pi}(s,a)$ with the inputs $s$ and $a$ at gradient step $t$. The action $a$ is either action-loaded or action-generated in the algorithms, which is related to the actor. We separate the actor's behavior from the convergence analysis of the Q-network. Hence, the TD-error corresponding to the MSTD target \eqref{eq:multistateqsum} becomes
\begin{align}
 \TD^{\error} =& \frac{1}{L}\sum_{l=1}^L \Bigg(\sum_{i=1}^l \gamma^{i-1}\mathbf{r}_{t+i}+ \max_{b \in \mathcal{A}} \gamma^l Q_{t}(\mathbf{s}_{t+l},b) \nonumber \\
  &-Q_t(\mathbf{s}_t,\mathbf{a}_t)\Bigg). \label{TDerror}
\end{align}
For such a Q-learning process with the MSTD error \eqref{TDerror}, the operator $\mathbf{H}^l$ is specifically defined as 
\begin{align}
    (\mathbf{H}^lQ_t)(\mathbf{s}_t, \mathbf{a}_t) &= \sum_{y \in \mathcal{S}} p_l(y|\mathbf{s}_t, \mathbf{a}_t)\bigg[\bar{\mathbf{r}}_{t,l}     +\gamma^l \max_{b \in \mathcal{A}}Q_t(y,b)\bigg] \label{HQt}
\end{align}
where
$\bar{\mathbf{r}}_{t,l} =\sum_{i=1}^l \gamma^{i-1}\mathbf{r}_{t+i}$
for simpler notation. The convergence of Q-learning with the MSTD error is stated in the following theorem.

\begin{theorem}\label{theorem1}
For a finite MDP defined by $(\mathcal{S},\mathcal{A},\mathcal{P},\mathcal{R},\gamma)$ where the Q-function $Q_t(s,a)$ is updated by the rule:
    \begin{align}
        \label{eq:theorem target}
        Q_{t+1}\left(\mathbf{s}_t, \mathbf{a}_t\right) = Q_t\left(\mathbf{s}_t, \mathbf{a}_t\right)+\alpha_t \TD^{\error},
    \end{align}
    where $\TD^{\error}$ is defined in \eqref{TDerror}. Then, $Q_t(s,a)$ converges to $Q^f(s,a)$ w.p.1  under the following assumptions:
\begin{enumerate}
    \item $\mathcal{S}$ and $\mathcal{A}$ are finite;
    \item $\alpha_t \in [0,1]$, $\sum_t \alpha_t = \infty$, $\sum_t \alpha_t^2 < \infty$;
    \item $\mathbf{r}_t$ is bounded; and
    \item $\gamma \in [0,1)$.
\end{enumerate}
\end{theorem}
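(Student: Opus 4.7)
The plan is to cast the update \eqref{eq:theorem target} into the stochastic approximation form of Lemma~\ref{lemma:Delta}, using $\Delta_t(s,a) := Q_t(s,a) - Q^f(s,a)$. As a preliminary, I would clarify that because the MSTD target averages the operators $\mathbf{H}^1,\ldots,\mathbf{H}^L$, the fixed-point equation to work with is $\bar{\mathbf{H}}Q^f = Q^f$ with $\bar{\mathbf{H}} := \frac{1}{L}\sum_{l=1}^L \mathbf{H}^l$. Each $\mathbf{H}^l$ is a $\gamma^l$-contraction by Lemma~\ref{lemma:H operator}, so $\bar{\mathbf{H}}$ is a $\bar\gamma$-contraction with $\bar\gamma := \frac{1}{L}\sum_{l=1}^L \gamma^l < 1$ under assumption~(4), and Banach's theorem on the complete metric space granted by assumption~(1) delivers existence and uniqueness of $Q^f$.

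I would then rewrite \eqref{eq:theorem target} as
\begin{align*}
Q_{t+1}(\mathbf{s}_t,\mathbf{a}_t) = (1-\alpha_t)Q_t(\mathbf{s}_t,\mathbf{a}_t) + \alpha_t\cdot\frac{1}{L}\sum_{l=1}^L\left[\bar{\mathbf{r}}_{t,l} + \gamma^l \max_b Q_t(\mathbf{s}_{t+l},b)\right],
\end{align*}
subtract $Q^f(\mathbf{s}_t,\mathbf{a}_t)$ to obtain $\Delta_{t+1}(\mathbf{s}_t,\mathbf{a}_t) = (1-\alpha_t)\Delta_t(\mathbf{s}_t,\mathbf{a}_t) + \alpha_t F_t(\mathbf{s}_t,\mathbf{a}_t)$ where
\begin{align*}
F_t(\mathbf{s}_t,\mathbf{a}_t) = \frac{1}{L}\sum_{l=1}^L\left[\bar{\mathbf{r}}_{t,l} + \gamma^l \max_b Q_t(\mathbf{s}_{t+l},b)\right] - Q^f(\mathbf{s}_t,\mathbf{a}_t),
\end{align*}
and set $\alpha_t(s,a)=0$ for pairs not visited at step $t$ to keep the lemma's form. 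Assumptions~(1) and~(2) match conditions~(1) and~(2) of Lemma~\ref{lemma:Delta} immediately.

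The central verification is condition~(3). Taking the conditional expectation of $F_t$ over the sampled trajectory given the past $P_t$, I would identify
\begin{align*}
\mathbb{E}[F_t(s,a)\mid P_t] = (\bar{\mathbf{H}}Q_t)(s,a) - Q^f(s,a) = (\bar{\mathbf{H}}Q_t)(s,a) - (\bar{\mathbf{H}}Q^f)(s,a),
\end{align*}
and then apply Lemma~\ref{lemma:H operator} term-by-term inside the average to conclude $\|\mathbb{E}[F_t\mid P_t]\|_\infty \leq \bar\gamma\|\Delta_t\|_\infty$ with $\bar\gamma<1$. For condition~(4), the finiteness of $\mathcal{S}$ and $\mathcal{A}$ combined with assumption~(3) lets me bound every $Q_t(\cdot,\cdot)$ and $\max_b Q_t(\cdot,b)$ in terms of $\|\Delta_t\|_\infty + \|Q^f\|_\infty$, giving a conditional variance bound of the form $C(1+\|\Delta_t\|_\infty)^2$. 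Lemma~\ref{lemma:Delta} then yields $\Delta_t\to 0$ w.p.\ 1, i.e., $Q_t\to Q^f$.

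The main obstacle I expect is the reconciliation between the single-operator fixed-point equation displayed just before the theorem and the averaging structure of $\TD^{\error}$: the statement only makes consistent sense if $Q^f$ is interpreted as the fixed point of the averaged operator $\bar{\mathbf{H}}$, not of any individual $\mathbf{H}^l$. A closely related subtlety worth flagging is that $p_l$, and therefore $Q^f$, depends on whichever policy chooses the intermediate actions $\mathbf{a}_{t+1},\ldots,\mathbf{a}_{t+l-1}$, so the theorem yields convergence to a policy-dependent fixed point rather than to the optimal $Q^*$ — which is consistent with the authors' introductory claim that multi-step reinforcement learning need not converge to the optimal Q-function.
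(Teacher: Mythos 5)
Your proposal is correct and follows the same route as the paper's own proof: recast the update as $\Delta_{t+1}=(1-\alpha_t)\Delta_t+\alpha_t F_t$ with $\Delta_t=Q_t-Q^f$, verify condition 3) of Lemma~\ref{lemma:Delta} via the contraction property of Lemma~\ref{lemma:H operator}, and verify condition 4) by bounding the conditional variance of $F_t$ using the boundedness of rewards and the finiteness of $\mathcal{S}\times\mathcal{A}$. The one substantive point where you diverge is your treatment of $Q^f$, and there your version is the more careful one. The paper introduces $Q^f$ through \eqref{eq: fixed point} as a fixed point of a single $\mathbf{H}^l$ and then, in computing $\mathbb{E}[F_t]$, substitutes $Q^f=(\mathbf{H}^lQ^f)$ separately for every $l=1,\dots,L$; as written this requires $Q^f$ to be a \emph{common} fixed point of all $L$ operators, which generically fails because each $\mathbf{H}^l$ has its own (behavior-policy-dependent) fixed point. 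Your reformulation --- defining $Q^f$ as the unique fixed point of the averaged operator $\bar{\mathbf{H}}=\frac{1}{L}\sum_{l=1}^L\mathbf{H}^l$, which is a $\bar\gamma$-contraction with $\bar\gamma=\frac{1}{L}\sum_{l=1}^L\gamma^l<1$, and then writing $\mathbb{E}[F_t\mid P_t]=(\bar{\mathbf{H}}Q_t)-(\bar{\mathbf{H}}Q^f)$ before applying Lemma~\ref{lemma:H operator} term by term --- arrives at the identical final inequality while making the argument internally consistent. Your closing observation that $Q^f$ is policy-dependent and in general distinct from $Q^*$ is likewise accurate and matches what the authors claim informally in the introduction.
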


\begin{proof}

With $\TD^{\error}$ defined in \eqref{TDerror}, the equation  \eqref{eq:theorem target} can be rewritten as:
    \begin{align}
        \label{eq:theorem target rewrite}
        Q_{t+1}\left(\mathbf{s}_t, \mathbf{a}_t\right) =& (1-\alpha_t) Q_t\left(\mathbf{s}_t, \mathbf{a}_t\right)
        +\alpha_t \frac{1}{L} \times \nonumber \\
        & \sum_{l=1}^L \left(\bar{\mathbf{r}}_{t,l}  + \max_{b \in \mathcal{A}} \gamma^l Q_{t}(\mathbf{s}_{t+l},b) \right). 
    \end{align}
By subtracting $Q^f(\mathbf{s}_t,\mathbf{a}_t)$ from both side in \eqref{eq:theorem target rewrite}, we can obtain
    \begin{align}
        &Q_{t+1}\left(\mathbf{s}_t, \mathbf{a}_t\right) - Q^f(\mathbf{s}_t,\mathbf{a}_t)\nonumber \\ 
        =& (1-\alpha_t) (Q_t\left(\mathbf{s}_t, \mathbf{a}_t\right)-Q^f(\mathbf{s}_t,\mathbf{a}_t)) +\alpha_t \frac{1}{L}\times  \nonumber \\ 
        & \sum_{l=1}^L \left(\bar{\mathbf{r}}_{t,l}  + \max_{b\in \mathcal{A}} \gamma^l Q_{t}(\mathbf{s}_{t+l},b) 
        -Q^f(\mathbf{s}_t,\mathbf{a}_t)\right). \label{Q-Q*}
    \end{align}
    
Next, we define a function
    \begin{align}
    \label{delta_define}
        \Delta_t(s,a) = Q_t(s,a) -Q^f(s,a).
    \end{align}
As a result, \eqref{Q-Q*} becomes 
\begin{align}
        \label{DeltaProcess}
        \Delta_{t+1}(\mathbf{s}_t,\mathbf{a}_t)  =& (1-\alpha_t)\Delta_t(\mathbf{s}_t,\mathbf{a}_t)+\alpha_t F_t(\mathbf{s}_t,\mathbf{a}_t),
    \end{align}
    where $F_t(\mathbf{s}_t,\mathbf{a}_t)$ is defined as
    \begin{align*}
        F_t(\mathbf{s}_t,\mathbf{a}_t) =&\frac{1}{L}\sum_{l=1}^L \left(\bar{\mathbf{r}}_{t,l}  + \max_{b\in \mathcal{A}} \gamma^l Q_{t}(\mathbf{s}_{t+l},b) 
        -Q^f(\mathbf{s}_t,\mathbf{a}_t)\right).
    \end{align*} 
We will apply Lemma~\ref{lemma:Delta} to \eqref{DeltaProcess} with $x=(\mathbf{s}_t, \mathbf{a}_t)$ to prove that the process converges to zero w.p.1, thereby completing the proof of the theorem.
The first two assumptions of Lemma~\ref{lemma:Delta} are ensured by the corresponding assumptions in the theorem. What remains is to prove assumptions 3) and 4) of Lemma~\ref{lemma:Delta}.

To prove assumption 3), we calculate
    \begin{align*}
        \mathbb{E}\big[F_t(\mathbf{s}_t, \mathbf{a}_t)\big] 
        =&\frac{1}{L}\sum_{l=1}^L \Bigg(\sum_{y \in \mathcal{S}} p_l( y|\mathbf{s}_t,\mathbf{a}_t)\bigg[\bar{\mathbf{r}}_{t,l}\nonumber \\
        &+\gamma^l \max _{b \in \mathcal{A}} Q_t(y, b) -Q^f(\mathbf{s}_t, \mathbf{a}_t)\bigg]\Bigg) \nonumber \\
        =& \frac{1}{L}\sum_{l=1}^L\left[(\mathbf{H}^lQ_t)(\mathbf{s}_t, \mathbf{a}_t) -(\mathbf{H}^l Q^f)(\mathbf{s}_t, \mathbf{a}_t)\right], 
    \end{align*}
where the second equality follows from \eqref{eq: fixed point} and \eqref{HQt}. Next, using the definition of the maximum norm and the property \eqref{contraction}, we have 
    \begin{align}
        \|\mathbb{E}\big[F_t(\mathbf{s}_t, \mathbf{a}_t)\big]\|_\infty \leq & \frac{1}{L}\sum_{i=1}^L \|\mathbf{H}^lQ_t -\mathbf{H}^lQ^f\|_\infty \nonumber \\
        \leq& \frac{1}{L}\sum_{l=1}^L \gamma^l \|Q_t-Q^f\|_\infty
    \end{align}
where $\gamma^l < \gamma <1$. This simplifies to
    \begin{align}
        \|\mathbb{E}\big[F_t(\mathbf{s}_t, \mathbf{a}_t)\big]\|_\infty  
        &\leq \frac{1}{L}\sum_{l=1}^L \gamma \|Q_t-Q^f\|_\infty \nonumber\\
        &=\gamma\|Q_t-Q^f\|_\infty=\|\Delta_t\|_\infty.
    \end{align}
This verifies assumption 3) of Lemma~\ref{lemma:Delta}.

To prove assumption 4), we calculate the variance of $F_t(\mathbf{s}_t,\mathbf{a}_t)$ as follows:
\begin{align*}
&\operatorname{var}[F_t(\mathbf{s}_t,\mathbf{a}_t)]  \\
        =& \mathbb{E}\Bigg[ \bigg(\frac{1}{L}\sum_{l=1}^L\big(\bar{\mathbf{r}}_{t,l} 
        +\gamma^l \max_{b \in \mathcal{A}} Q_t(\mathbf{s}_{t+l},b) -Q^f(\mathbf{s}_t,\mathbf{a}_t)\big) \\
        & -\frac{1}{L}\sum_{l=1}^L(\mathbf{H}^lQ_t)(\mathbf{s}_t,\mathbf{a}_t)     +Q^f(\mathbf{s}_t,\mathbf{a}_t)\bigg)^2\Bigg] \nonumber \\
        =& \frac{1}{L^2} \mathbb{E}\Bigg[\bigg(\sum_{l=1}^L (\bar{\mathbf{r}}_{t,l}+\gamma ^l\max_{b \in \mathcal{A}} Q_t(\mathbf{s}_{t+l},b) ) \\
        &-\sum_{l=1}^L (\mathbf{H}^lQ_t)(\mathbf{s}_t,\mathbf{a}_t)\bigg)^2\Bigg].
        \end{align*}
Using the definition in \eqref{HQt}, we have
\begin{align*}
   (\mathbf{H}^lQ_t)(\mathbf{s}_t, \mathbf{a}_t) &= \mathbb{E} \bigg[\bar{\mathbf{r}}_{t,l}     +\gamma^l \max_{b \in \mathcal{A}}Q_t(\mathbf{s}_{t+l},b)\bigg],
\end{align*}
which can simplify the variance expression as follows:
\begin{align*}
&\operatorname{var}[F_t(\mathbf{s}_t,\mathbf{a}_t)]  \\
               =&\frac{1}{L^2}\mathbb{E}\Bigg[\bigg(\sum_{l=1}^L (\bar{\mathbf{r}}_{t,l}+\gamma ^l\max_{b \in \mathcal{A}} Q_t(\mathbf{s}_{t+l},b) )\nonumber \\
        &-\mathbb{E}\bigg[\sum_{l=1}^L (\bar{\mathbf{r}}_{t,l} 
        +\gamma^l\max_{b \in \mathcal{A}} Q_t(\mathbf{s}_{t+l},b)) \bigg]\bigg)^2\Bigg] \nonumber \\
        =& \frac{1}{L^2}\operatorname{var}\bigg(\sum_{l=1}^L\gamma^l (\bar{\mathbf{r}}_{t,l}+\max_{b \in \mathcal{A}} Q_t(\mathbf{s}_{t+l},b)) \bigg).
        \end{align*}
Applying the variance property yields    
        \begin{align*}
        &\operatorname{var}[F_t(\mathbf{s}_t,\mathbf{a}_t)]  \\ 
        \leq&\frac{1}{L^2}\mathbb{E}\bigg[\bigg(\sum_{l=1}^L\gamma^l (\bar{\mathbf{r}}_{t,l}  +\max_{b \in \mathcal{A}} Q_t(\mathbf{s}_{t+l},b)) \bigg)^2\bigg]\\
         \leq&\frac{\gamma^2}{L^2}\mathbb{E}\bigg[\bigg(\sum_{l=1}^L  (|\bar{\mathbf{r}}_{t,l}|+\max_{b \in \mathcal{A}} |Q^f(\mathbf{s}_{t+l},b)|\\& +\max_{b \in \mathcal{A}} |\Delta_t(\mathbf{s}_{t+l},b)|) \bigg)^2\bigg]\\
          \leq&\gamma^2\bigg(\bar r_{\max}+\max_{s\in \mathcal{A},  b \in \mathcal{A}} |Q^f(s,b)|+\max_{s\in \mathcal{A}, b \in \mathcal{A}} |\Delta_t(s,b)| \bigg)^2\\
          =&\gamma^2\bigg(C_0+\|\Delta_t\|_\infty\bigg)^2
        \end{align*}
where $\bar r_{\max}$ is the maximum value of $\bar{\mathbf{r}}_{t,l}$, which is bounded because $\mathbf{r}_t$ is bounded, and $C_0=\bar r_{\max}+\max_{s\in \mathcal{A},  b \in \mathcal{A}} |Q^f(s,b)|$.

If $C_0 \leq 1$, then
\begin{align*}
        \operatorname{var}[F_t(\mathbf{s}_t,\mathbf{a}_t)]  \leq  \gamma^2\bigg(1+\|\Delta_t\|_\infty\bigg)^2.
        \end{align*}
On the other hand, if $C_0>1$, we have
\begin{align*}
         \operatorname{var}[F_t(\mathbf{s}_t,\mathbf{a}_t)]  \leq   C_0^2 \gamma^2\bigg(1+\|\Delta_t\|_\infty\bigg)^2.
        \end{align*}
In both cases, assumption 4) of Lemma~\ref{lemma:Delta} is  satisfied with $C=\gamma^2$ or $C=C_0^2 \gamma^2$, respectively. 
Thus, the proof is complete. \end{proof}

\bibliographystyle{ieeetr}
\bibliography{a}



\begin{figure*}
\centering
    \begin{minipage}{0.315\linewidth}
	\centering
	\includegraphics[width=\linewidth]{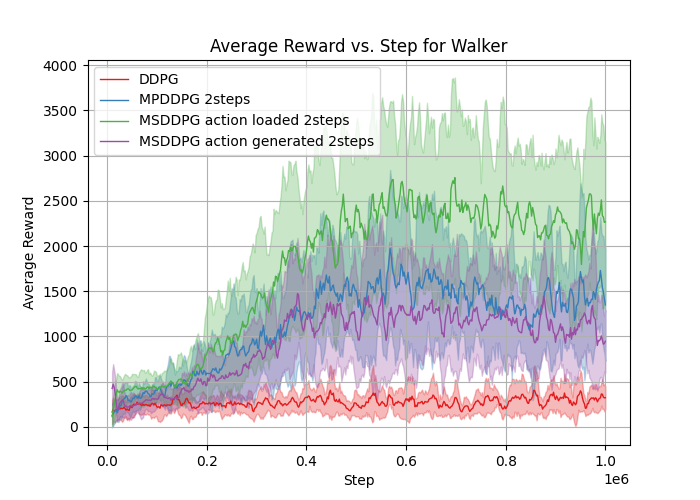}
	\label{fig:ddpgwalker2}
	\end{minipage}
	\hfill
    \begin{minipage}{0.315\linewidth}
	\centering
	\includegraphics[width=\linewidth]{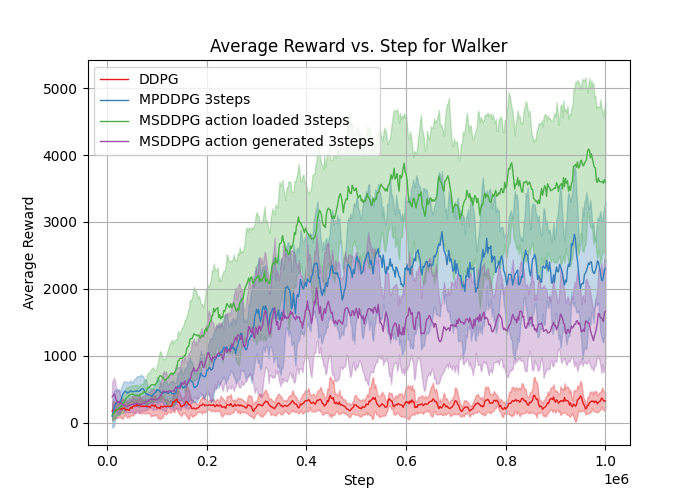}
	\label{fig:ddpgwalker3}
        \end{minipage}
	\hfill
    \begin{minipage}{0.315\linewidth}
	\centering
	\includegraphics[width=\linewidth]{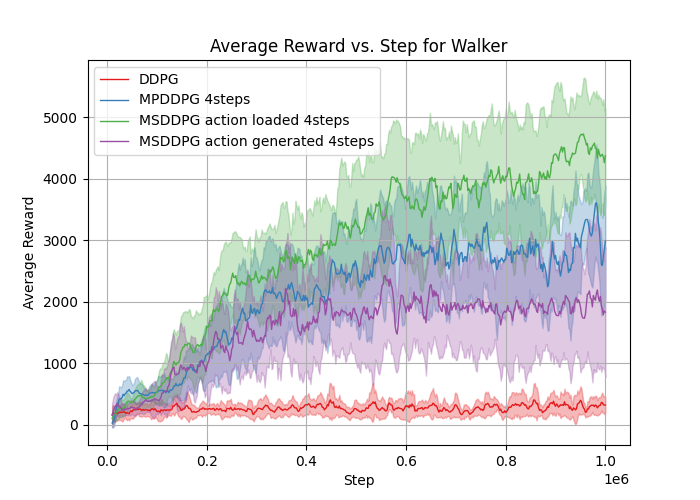}
	\label{fig:ddpgwalker4}
	\end{minipage}
	\caption{Performance comparison of various DDPG algorithms in Walker-v4 environment.}
	\label{fig:ddpg_walker}
\end{figure*}

\begin{figure*}
\centering
    \begin{minipage}{0.315\linewidth}
	\centering
	\includegraphics[width=\linewidth]{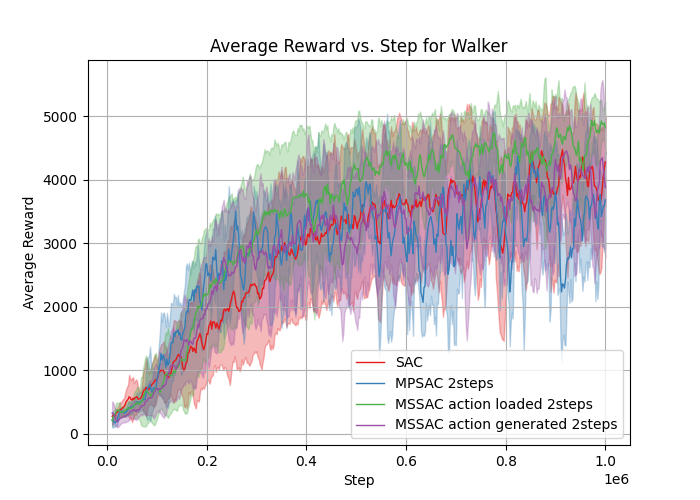}
	\label{fig:sacwalker2}
	\end{minipage}
	\hfill
    \begin{minipage}{0.315\linewidth}
	\centering
	\includegraphics[width=\linewidth]{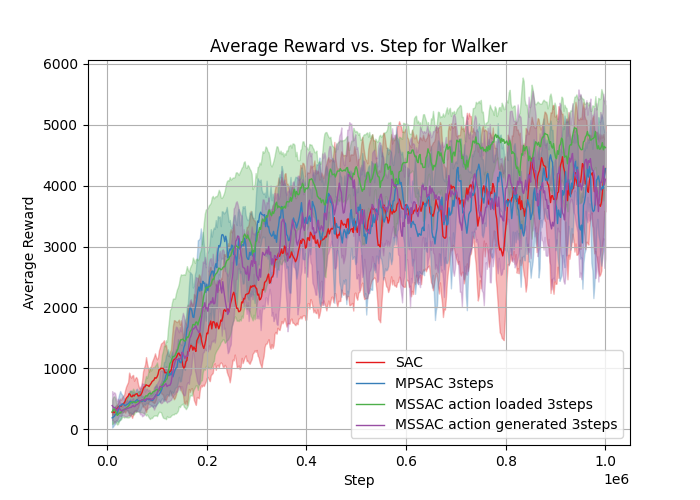}
	\label{fig:sacwalker3}
        \end{minipage}
	\hfill
    \begin{minipage}{0.315\linewidth}
	\centering
	\includegraphics[width=\linewidth]{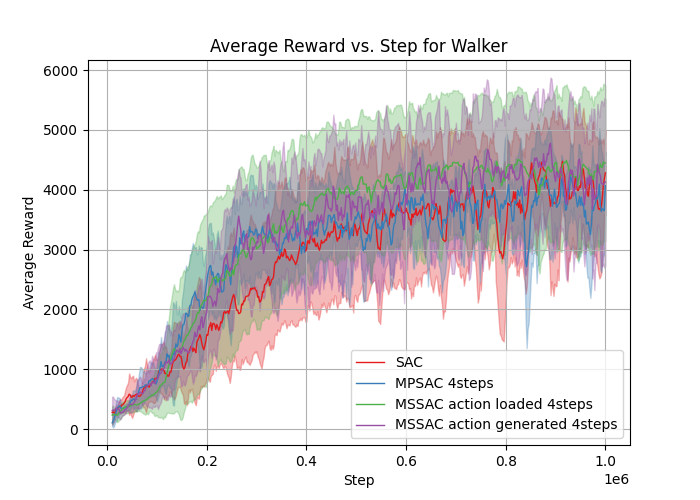}
	\label{fig:sacwalker4}
	\end{minipage}
	\caption{Performance comparison of various SAC algorithms in Walker-v4 environment.}
	\label{fig:sac_walker}
\end{figure*}

\begin{figure*}
\centering
    \begin{minipage}{0.315\linewidth}
	\centering
	\includegraphics[width=\linewidth]{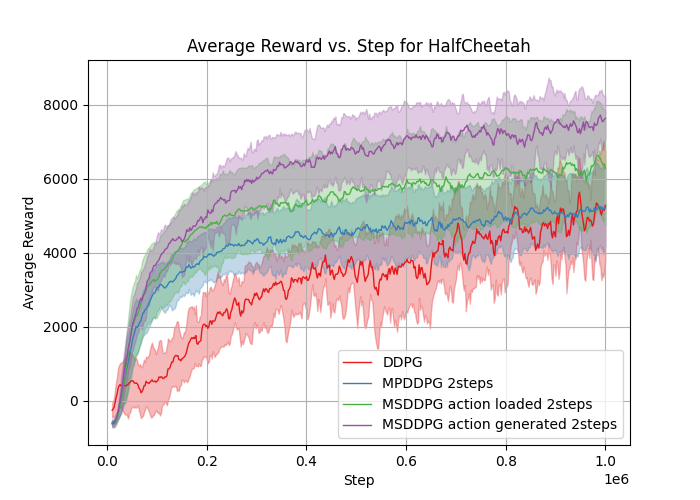}
	\label{fig:ddpghalf2}
	\end{minipage}
	\hfill
    \begin{minipage}{0.315\linewidth}
	\centering
	\includegraphics[width=\linewidth]{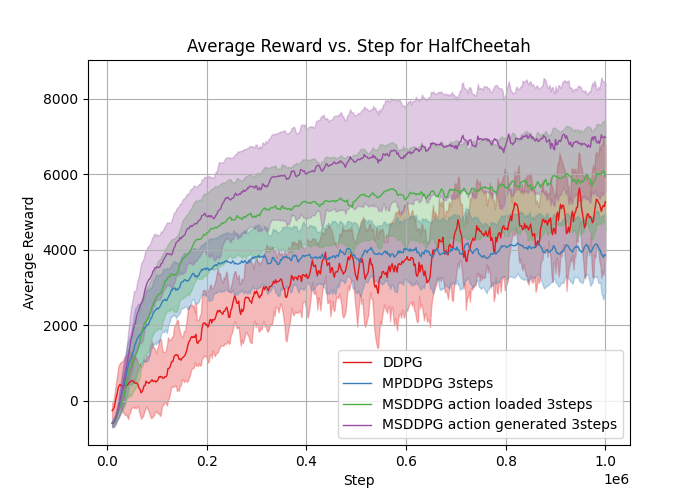}
	\label{fig:ddpghalf3}
        \end{minipage}
	\hfill
    \begin{minipage}{0.315\linewidth}
	\centering
	\includegraphics[width=\linewidth]{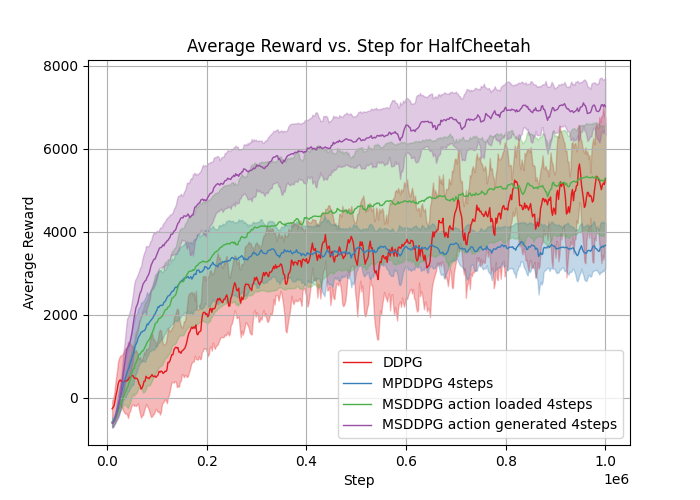}
	\label{fig:ddpghalf4}
	\end{minipage}
	\caption{Performance comparison of various DDPG algorithms in HalfCheetah-v4 environment. }
	\label{fig:ddpg_half}
\end{figure*}

\begin{figure*}
\centering
    \begin{minipage}{0.315\linewidth}
	\centering
	\includegraphics[width=\linewidth]{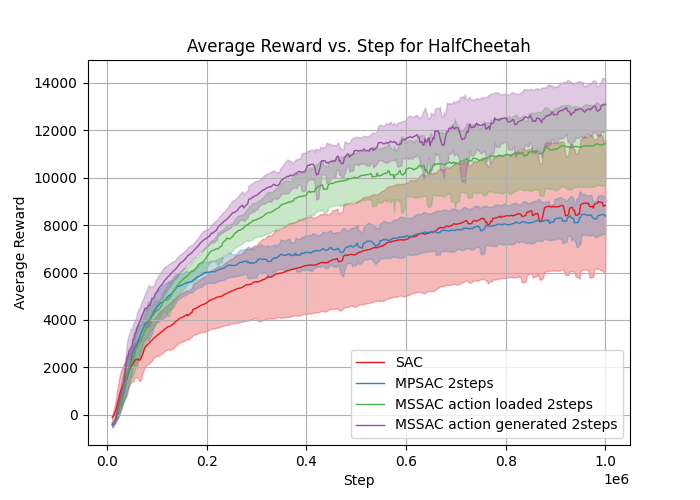}
	\label{fig:sachalf2}
	\end{minipage}
	\hfill
    \begin{minipage}{0.315\linewidth}
	\centering
	\includegraphics[width=\linewidth]{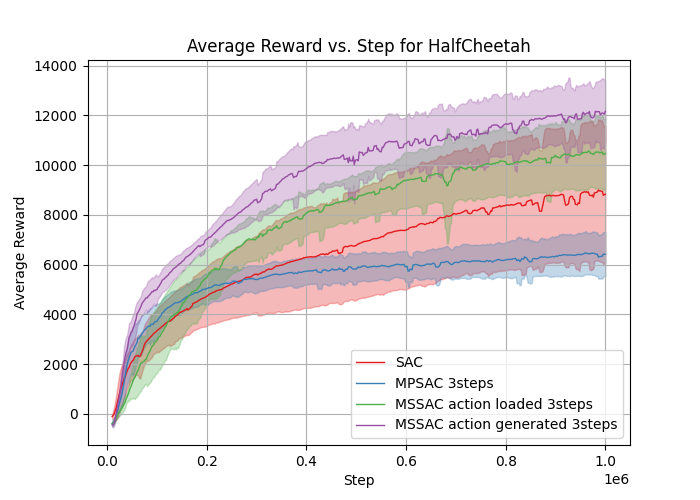}
	\label{fig:sachalf3}
        \end{minipage}
	\hfill
    \begin{minipage}{0.315\linewidth}
	\centering
	\includegraphics[width=\linewidth]{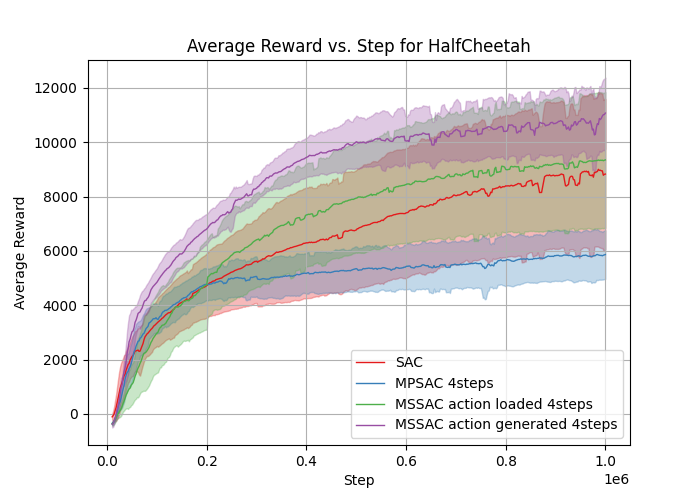}
	\label{fig:sachalf4}
	\end{minipage}
	\caption{Performance comparison of various SAC algorithms in HalfCheetah-v4 environment.}
	\label{fig:sac_half}
\end{figure*}

\end{document}